\definecolor{BrightOrange}{HTML}{FF7A00}  
\theoremstyle{definition}
\newtheorem{assumption}{Assumption}
\newtheorem{remark}{Remark}
\newtheorem{problem}{Problem}
\theoremstyle{plain}
\newtheorem{theorem}{Theorem}
\newtheorem{lemma}{Lemma}
\def\BibTeX{{\rm B\kern-.05em{\sc i\kern-.025em b}\kern-.08em
    T\kern-.1667em\lower.7ex\hbox{E}\kern-.125emX}}
\title{Model-Based Reinforcement Learning Under Confounding}
\author{Nishanth Venkatesh$^{1}$, \IEEEmembership{Student Member, IEEE}, and Andreas A. Malikopoulos$^{1,2}$, \IEEEmembership{Senior Member, IEEE}
\thanks{This research was supported by in part by NSF under Grants CNS-2401007, CMMI-2348381, IIS-2415478 and in part by MathWorks.}
    \thanks{$^{1}$Department of Systems Engineering, Cornell University, Ithaca, NY 14850 USA.}
    \thanks{$^{2}$School of Civil and Environmental Engineering, Cornell University, Ithaca, NY 14853 USA (email: \texttt{ns942@cornell.edu; amaliko@cornell.edu).}} }
\begin{document}

\maketitle

\begin{abstract}
We investigate model-based reinforcement learning in contextual Markov decision processes (C-MDPs) in which the context is unobserved and induces confounding in the offline dataset. In such settings, conventional model-learning methods are fundamentally inconsistent, as the transition and reward mechanisms generated under a behavioral policy do not correspond to the interventional quantities required for evaluating a state-based policy. To address this issue, we adapt a proximal off-policy evaluation approach that identifies the confounded reward expectation using only observable state-action-reward trajectories under mild invertibility conditions on proxy variables. When combined with a behavior-averaged transition model, this construction yields a surrogate MDP whose Bellman operator is well defined and consistent for state-based policies, and which integrates seamlessly with the maximum causal entropy (MaxCausalEnt) model-learning framework. The proposed formulation enables principled model learning and planning in confounded environments where contextual information is unobserved, unavailable, or impractical to collect.
\end{abstract}

\begin{IEEEkeywords}
Offline reinforcement learning, causality, and confounding.
\end{IEEEkeywords}

\section{Introduction}

Model-based reinforcement learning (MBRL) has long been recognized as a data-efficient approach to sequential decision-making, as it explicitly estimates the system dynamics and reward mechanism to enable planning through simulated rollouts \cite{Sutton1991Dyna}. This paradigm typically achieves greater sample efficiency than model-free methods. Recent developments include the use of Gaussian process models for dynamics learning \cite{Deisenroth2011PILCO} and neural dynamics models with model-free fine-tuning \cite{Nagabandi2018Neural}, both of which demonstrate improved data efficiency in policy optimization. Additional advances based on probabilistic ensembles with trajectory sampling \cite{Chua2018PETS} and latent-dynamics planning \cite{Hafner2019PlaNet} further illustrate that learned models can support competitive performance on complex control tasks. A common requirement across these methodologies is the ability to learn a model that accurately captures the underlying transition and reward structures, as model fidelity is essential for reliable planning.

Most MBRL methods implicitly assume that the data-generating process is fully observed and free of hidden factors that jointly influence decisions and outcomes.
In many real-world domains, this assumption is violated.
Examples include human--robot interaction, routing and maneuvering connected and automated vehicles in mixed traffic \cite{venkatesh2023connected}, medical records, and personalized decision systems \cite{dave2024airecommend,sun2025recommendation}.
In such settings, the offline dataset available for learning may omit latent variables that affect both the environment and the data-generating policy. When actions depend on these hidden variables, the collected data encode arbitrary correlations among state, action, next state, and reward.
In causal inference, this phenomenon is referred to as \emph{confounding}.
Consequently, transition and reward models learned directly from such data could misrepresent the effect of candidate policies and introduce systematic bias into planning.

For example, consider historical medical records in which each entry contains a doctor’s chosen treatment and the patient’s eventual outcome. Although the dataset includes observable patient characteristics, the doctor’s decisions were based on additional information—such as severity of symptoms, early warning signs, or clinical intuition—that was not recorded. This latent factor affects both the treatment selected and the patient’s likelihood of recovery, producing correlations that reflect the doctor’s unobserved reasoning rather than the causal effect of the treatment itself. Learning treatment effects directly from such observational data leads to biased conclusions, since the behavioral policy exploited information unavailable to the learner. This phenomenon mirrors the confounding encountered in contextual decision processes, where unobserved context simultaneously influences the behavioral policy and the system response.

Reinforcement learning under unobserved confounding has been studied primarily from a model-free perspective.
Identification and proximal causal inference frameworks \cite{tchetgen2020introduction,cui2020semiparametric,miao2018identifying}
provide tools to recover causal effects using proxy variables when hidden confounders are present.
These include robust off-policy evaluation (OPE) under bounded confounding \cite{kallus2020confounding}, non-identifiability and hardness results for confounded environments \cite{Makar2022ConfoundingRL}, and proximal
reinforcement learning methods that exploit proxy variables \cite{bennett2024proximal}.
Recent developments provide tensor-based identification policies for offline data \cite{kausik2024offline,Kausik2024OfflineConfounding} and proximal off-policy evaluation in partially observed Markov decision processes \cite{tennenholtz2020off}.
These contributions provide powerful tools for off-policy evaluation and policy optimization.
However, they largely operate in a model-free setting and do not directly address learning a surrogate transition-reward model that can be embedded into standard MBRL pipelines.

Contextual Markov decision processes (C-MDPs) provide a natural setting for confounded MBRL, as an unobserved context variable can influence both the transition law and the reward function.
When the behavioral policy that generated the dataset depends on this hidden context, the observed transitions and rewards reflect correlations induced by the unobserved variable rather than the causal effect of the actions. As a result, models learned directly from such behavioral data fail to represent the system from the perspective of a state-based evaluation policy, and the Bellman equations associated with that policy are no longer consistent with the available observations. A central challenge, therefore, is to construct a surrogate MDP whose transition dynamics and reward expectations are identifiable from the restricted dataset and faithfully capture the environment as experienced by a state-based agent.

In this paper, we develop an MBRL framework for contextual MDPs in which the context is unobserved in the available dataset and acts as a confounder.
Our approach combines a behavior-averaged transition model with a proximal reconstruction of the reward component of the Bellman equation.
We adapt a proximal OPE construction technique \cite{tennenholtz2020off} to the C-MDP setting to express the confounded reward expectation as a functional of observable proxy variables under standard invertibility assumptions from proximal causal inference.
This yields a deconfounded Bellman operator that is compatible with state-based policies and can be evaluated from offline data.
We adapt a well-known data-based model-learning framework- maximum causal entropy (MaxCausalEnt), introduced in \cite{ziebart2010modeling}.
Inverse reinforcement learning and belief-dynamics extensions to the MaxCausalEnt framework in \cite{herman2016inverse,Reddy2018BeliefDynamics} demonstrate how such formulations can capture purposeful behavior in stochastic environments.
We integrate the proximal reward correction with the MaxCausalEnt model-learning formulation.
In our setting, the MaxCausalEnt framework is used to jointly learn a Q-function and a behavior-averaged transition model that are Bellman-consistent with respect to the deconfounded reward expectation.
The resulting surrogate MDP supports model-based planning in confounded environments where contextual information is unavailable or impractical to record.

The remainder of the paper is organized as follows. 
In Section~\ref{section:Problem Formulation}, we formalize the learning problem that arises when data are generated by an unknown, context-dependent behavioral policy, and we introduce the resulting surrogate model-learning formulation. In Section~\ref{section:Solution Approach}, we develop our solution approach: we reinterpret the C--MDP through a causal inference perspective, reformulate the system as a POMDP to reveal the role of the hidden context, and derive a proximal off-policy evaluation method that identifies the confounded reward expectation using observable proxy variables. In Section~\ref{section:Numerical Simulation}, we present a numerical example that illustrates the necessity and impact of the proximal correction. Finally, in Section~\ref{sec:conclusion}, we draw concluding remarks and outline several directions for future research.

\section{Problem Formulation}
\label{section:Problem Formulation}

We consider a system that can be modeled as a C-MDP given by the tuple $\mathcal{M}= ( \mathcal{X}, \mathcal{Z}, \mathcal{U}, P, r,\eta,\nu,T )$. 
The system evolves over discrete time steps $t = 0,\ldots,T$, where $T \in \mathbb{N}$ is the finite horizon. The state space $\mathcal{X}$ and action space $\mathcal{U}$ are finite. At each time $t$, the random variables $X_t \in \mathcal{X}$ and $U_t \in \mathcal{U}$ denote the system state and control action, respectively, and the initial state satisfies $X_0 \sim \eta$. The system is influenced by an unobserved context $Z_t \in \mathcal{Z}$, with $Z_0 \sim \nu$. The joint process $(X_t, Z_t)$ is Markovian with respect to $(X_t, Z_t, U_t)$. The state transition is governed by
\[
P_X : \mathcal{X} \times \mathcal{U} \times \mathcal{Z} \rightarrow \Delta(\mathcal{X}), 
\qquad X_{t+1} \sim P_X(\cdot \mid x_t, z_t, u_t),
\]
and the context evolves according to
\[
P_Z : \mathcal{Z} \times \mathcal{X} \times \mathcal{U} \rightarrow \Delta(\mathcal{Z}), 
\qquad Z_{t+1} \sim P_Z(\cdot \mid x_t, z_t, u_t).
\]
The context $Z_t$ is unobserved by the planner, and it may influence both dynamics and rewards. 
Furthermore, the reward function is given by the mapping $r: \mathcal{X} \times \mathcal{U} \times \mathcal{Z} \rightarrow \mathbb{R}$.
At each $t=0,\ldots, T-1$, we let $\tau_t$ be the realization of a trajectory of the system up to time $t$, which we define as 
\begin{align}
    \tau_t&=(x_0,z_0,u_0,r_0,\ldots,x_t,z_t,u_t,r_t), 
\end{align}
where $\tau= (x_0,z_0,u_0,r_0,\ldots,x_T,z_T,u_T,r_T)$ denotes the entire trajectory up to the time of horizon.
Let $\mathcal{T}$ be the space of all feasible full trajectories $\tau$ of the system. 
Having specified the C-MDP model, we now formalize the decision-making problem of interest within this framework.

\subsection{State-based Control policy}
\label{section:Problem Formulation}

A planner has access to a dataset generated by an expert who interacts with the C-MDP
described above. 
We consider that the expert follows a behavioral policy 
$\boldsymbol{g}^{\boldsymbol{b}}=\{g_t^{\boldsymbol{b}}\}_{t=0}^{T}$, where $g_t^{\boldsymbol{b}}$ is the expert's control law at time $t$. 
Each control law $g_t^{\boldsymbol{b}}$ may depend on the unobserved context $Z_t$ and can therefore be of the form 
$g_t^{\boldsymbol{b}}:\mathcal{X}\times\mathcal{Z}\rightarrow\Delta(\mathcal{U})$.
Since contextual information is not recorded, the planner observes only $(x_t,u_t,r_t)$ tuples.  
Let $\tau^o=(x_0,u_0,r_0,\ldots,x_T,u_T,r_T)$ denote the observable component  
of a trajectory $\tau$, and let 
$\mathcal{D}^{\boldsymbol{b}}=\{\tau^{o,i}\}_{i=1}^{N}$ be the dataset.

Since the expert’s behavior may depend on the latent context, two key components of the C-MDP become unidentifiable from the observable data:  
(i) the context-dependent transition mapping $p_x(x_{t+1}\mid x_t,z_t,u_t)$, and  
(ii) the expert's full state-context-based control law $g_t^{\boldsymbol{b}}(u_t\mid x_t,z_t)$ at each $t$.  
However, although the context is unobserved, the behavioral policy induces a well-defined  \emph{state-action next-state} distribution, which is identifiable from the dataset.
Thus, based on the triplets $(x_t,u_t,x_{t+1})$ from the dataset, the context-marginalized transition model 
\begin{align}
    p^{\boldsymbol{\phi}}(x_{t+1}\mid x_t,u_t)
    =\sum_{z_t\in\mathcal{Z}}
    p(x_{t+1}\mid x_t,z_t,u_t)\,p(z_t\mid x_t,u_t). 
    \label{eq:marg_transition}
\end{align}
is identifiable and parameterized by a stochastic matrix $\boldsymbol{\phi}$ of appropriate dimension, which serves as a surrogate that approximates the context-marginalized dynamics.
Each element of the matrix $\boldsymbol{\phi}$ is denoted by $p^{\boldsymbol{\phi}}(x_{t+1}\mid x_t,u_t)$.
As the context is unobserved, the planner cannot recover the true context-dependent model $p_X(x_{t+1}\mid x_t,z_t,u_t)$.
Instead, the learned model represents the \emph{context-marginalized} evolution of the state, induced by the expert.

In parallel, the behavioral policy induces a distribution over actions conditioned on the observable state.  
The planner, therefore, seeks to learn a state-based control policy that best explains the observed actions under a suitable modeling framework.
In particular, the planner aims to learn  
(i) a transition model $p^{\boldsymbol{\phi}}(x_{t+1}\mid x_t,u_t)$ from observable transitions, and  
(ii) a state-based policy 
$\boldsymbol{g}^{\boldsymbol{\theta}}=\{g_t^{\boldsymbol{\theta}}\}_{t=0}^{T-1}$ indexed by $\theta$, where $\theta$ identifies a particular member of the policy class under consideration. For each $t$, the mapping $g^{\theta}_t : \mathcal{X} \rightarrow \Delta(\mathcal{U})$ specifies a distribution over actions conditioned only on the observable state.

\subsection{State-Based Policy and Value/Q Parameterization}

We restrict attention to state-based policies 
$\boldsymbol{g}^{\boldsymbol{\theta}}=\{g_t^{\boldsymbol{\theta}}\}_{t=0}^{T}$,
where the control law at each $t$ is a mapping
$g_t^{\boldsymbol{\theta}}:\mathcal{X}\rightarrow\Delta(\mathcal{U})$.
For any state $x_t$ and action $u_t$, we denote the action probability by
$p^{\boldsymbol{\theta}}(u_t\mid x_t)$.
Given a surrogate transition model $p^{\boldsymbol{\phi}}$, the value of
policy $\boldsymbol{g}^{\boldsymbol{\theta}}$ at time $t$ is
\begin{align}
    V_t^{\boldsymbol{\theta},\boldsymbol{\phi}}(x_t)
    =
    \mathbb{E}^{\boldsymbol{\theta},\boldsymbol{\phi}}
    \!\left[
        \sum_{k=t}^{T-1} r(X_k,Z_k,U_k)
        \,\Big|\, X_t=x_t
    \right].
    \label{eq:def_value_function}
\end{align}

\noindent The corresponding Q-function is defined as
\begin{align}
    Q_t^{\boldsymbol{\theta},\boldsymbol{\phi}}(x_t,u_t)
    &= 
    \mathbb{E}^{\boldsymbol{\theta}}
    \!\left[
        r(X_t,Z_t,U_t)
        \,\big|\, X_t=x_t,U_t=u_t
    \right] \nonumber\\[2pt]
    &\quad +
    \mathbb{E}^{\boldsymbol{\phi}}
    \!\left[
        V_{t+1}^{\boldsymbol{\theta},\boldsymbol{\phi}}(X_{t+1})
        \,\big|\, X_t=x_t,U_t=u_t
    \right],
    \label{eq:def_Q_function}
\end{align}
where $\mathbb{E}^{\boldsymbol{\theta}}[\cdot]$ denotes expectation with
respect to the distribution of the latent context $Z_t$ induced by the
policy $\boldsymbol{g}^{\boldsymbol{\theta}}$, and
$\mathbb{E}^{\boldsymbol{\phi}}[\cdot]$ denotes expectation with respect
to the stochastic evolution of $X_{t+1}$ under the surrogate transition
model ${\boldsymbol{\phi}}$.
The value and Q-functions satisfy
\begin{align}
    V_t^{\boldsymbol{\theta},\boldsymbol{\phi}}(x_t)
    =
    \sum_{u\in\mathcal{U}}
    p^{\boldsymbol{\theta}}(u\mid x_t)\,
    Q_t^{\boldsymbol{\theta},\boldsymbol{\phi}}(x_t,u).
    \label{eq:V_Q_relation}
\end{align}

To jointly learn the surrogate transition model and the state-based
policy, we adopt the maximum causal entropy framework
\cite{ziebart2010modeling}.  
This framework provides a data-driven methodology to infer both the dynamics and
the behavioral policy from a dataset of trajectories.  
Among all models that explain the observed actions, this framework selects one that  
(i) matches the expert’s action distribution as closely as possible, while  
(ii) avoiding additional structure not justified by the data.
In a finite-horizon setting, this principle leads to a softmax (Boltzmann)
representation of the policy in terms of the Q-function under the
surrogate model.

Hence, for a fixed surrogate transition model ${\boldsymbol{\phi}}$, the
state-based control law at time $t$ is parameterized as
\begin{align}
    p^{\boldsymbol{\theta}}(u_t\mid x_t)
    =
    \frac{
        \exp\!\big(Q_t^{\boldsymbol{\theta},\boldsymbol{\phi}}(x_t,u_t)\big)
    }{
        \sum_{u'\in\mathcal{U}}
        \exp\!\big(Q_t^{\boldsymbol{\theta},\boldsymbol{\phi}}(x_t,u')\big)
    }.
    \label{eq:softmax_policy}
\end{align}

\noindent For each fixed $\boldsymbol{\phi}$, the Q-function is determined by the parameters $(\boldsymbol{\theta},\boldsymbol{\phi})$,
and the softmax relation \eqref{eq:softmax_policy} induces a unique
state-based control law $g_t^{\boldsymbol{\theta}}$.
The policy is indexed by $\boldsymbol{\theta}$ to emphasize that
$\boldsymbol{\phi}$ is treated as part of the learned surrogate
environment, while $\boldsymbol{\theta}$ parameterizes the choice of
policy.
Thus, learning $(\boldsymbol{\theta},\boldsymbol{\phi})$ amounts to
learning a Q-function that is Bellman-consistent with respect to
$p^{\boldsymbol{\phi}}$ and then using \eqref{eq:softmax_policy} to
obtain the corresponding state-based policy.

\subsection{MaxCausalEnt Model Learning}

The MaxCausalEnt principle fits $(\boldsymbol{\theta},\boldsymbol{\phi})$
by maximizing the likelihood of expert actions under the softmax policy while 
enforcing Q-based Bellman consistency.

\begin{problem}[MaxCausalEnt Model Learning]
\label{prob:maxcausalent}
\begin{align}
    \max_{\boldsymbol{\theta},\boldsymbol{\phi}}
    \quad &
    \mathbb{E}_{\mathcal{D}^{\boldsymbol{b}}}
    \!\left[
        -\log \!\left(
            \prod_{t=0}^{T-1}
            p^{\boldsymbol{\theta}}(u_t\mid x_t)
        \right)
    \right]
    \label{eq:maxent_obj}
\end{align}
\begin{align}
    \text{subject to}\quad
    Q_t^{\boldsymbol{\theta},\boldsymbol{\phi}}(x_t,u_t)
    &=
    \mathbb{E}^{\boldsymbol{\theta}}
    \!\left[
        r(X_t,Z_t,U_t)
        \mid x_t,u_t
    \right] \nonumber\\[-2pt]
    &\quad+
    \mathbb{E}^{\boldsymbol{\phi}}
    \!\left[
        V_{t+1}^{\boldsymbol{\theta},\boldsymbol{\phi}}(X_{t+1})
        \mid x_t,u_t
    \right],
    \label{eq:maxent_bellman_Q}\\[4pt]
    V_t^{\boldsymbol{\theta}}(x_t)
    &= \sum_{u_t\in\mathcal{U}}
       p^{\boldsymbol{\theta}}(u_t\mid x_t)\,
       Q_t^{\boldsymbol{\theta},\boldsymbol{\phi}}(x_t,u_t), \nonumber\\
        \forall x_t&\in\mathcal{X},\;
        \forall u_t\in\mathcal{U}, \nonumber
\end{align}
where \eqref{eq:maxent_bellman_Q} expresses the Bellman consistency of the parametrized policy $\boldsymbol{g}^{\boldsymbol{\theta}}$ under the surrogate model $\boldsymbol{\phi}$.
\end{problem}

\begin{remark}
If contextual information were available, the reward and transition 
expectations in~\eqref{eq:maxent_bellman_Q} could be computed by 
marginalizing over $(x_t,z_t,u_t,x_{t+1})$ tuples.  
\end{remark}


\section{Solution Approach}
\label{section:Solution Approach}

In this section, we first reinterpret the C-MDP from a causal inference perspective. This highlights how the latent context acts as an unobserved confounder.
We cast the system as a structural causal model (SCM) to clarify how the behavioral policy and any parameterized policy induce different causal graphs and, consequently, different trajectory distributions. 
This motivates the use of \emph{proximal learning}—a causal inference tool designed to recover causal effects in the presence of hidden confounding—as a means to compute the reward term in the Bellman equation using only observable variables.
We then introduce the notation needed for the analysis and reinterpret the C-MDP as a POMDP to make the role of the hidden context explicit.
Later, we develop an off-policy evaluation-based modification of the MaxCausalEnt framework that uses proximal identification to obtain a deconfounded Bellman expectation specific to our problem formulation.

\subsection{Causal Inference Perspective}

Causal inference provides tools for reasoning about how interventions change the behavior of a system.
This allows analysis of the system beyond what is revealed by observational or correlational structure alone.
This distinction is especially important in offline settings—such as the C-MDP that we consider.
It is essential to note that the offline data collected under a behavioral policy
may rely on additional information unavailable to the planner, as given in Problem \ref{prob:maxcausalent}. 
In such cases, functional relationships between variables fail to reveal how the system would evolve under a different (intervened) policy.

A SCM represents each variable in the system as the output of a structural equation, with a directed acyclic graph (DAG) specifying the underlying causal dependencies \cite{pearl2009causality}. For the C--MDP introduced in Section~\ref{section:Problem Formulation}, these structural equations coincide with the stochastic dynamics governing the state, context, and reward processes. Figure~\ref{fig:contextual_mdp} illustrates a representative case in which the context is \emph{persistent}, i.e., $Z_{t+1} = Z_t$. The DAG shows how past states and actions influence future evolution while highlighting that the latent context $Z_t$ is a causal parent of both the next state $X_{t+1}$ and the reward $R_t$. Consequently, when $Z_t$ is unobserved, the behavioral data contain \emph{confounded} samples of the transition and reward pairs, as their observed variation reflects both the effects of the action and the hidden context.

Under the behavioral policy $\boldsymbol{g}^b$, the action $U_t$ may
depend on both $X_t$ and $Z_t$.
This alters the system’s SCM to introduce at each $t$, causal arrows from the context $Z_t$ and state $X_t$ to the action $U_t$ into the DAG.  
By contrast, under a state-based parameterized policy
$\boldsymbol{g}^{\boldsymbol{\theta}}$, the action depends only on $X_t$
and the arrow $Z_t \rightarrow U_t$ is absent.  
Thus, the behavioral and parameterized policies induce \emph{different}
SCMs, therefore different distributions over trajectories.
These issues motivate the need for an OPE
procedure capable of expressing the reward expectations.
In the next subsection, we reinterpret the C-MDP as a partially observed Markov decision process (POMDP), which reveals precisely how the trajectory distribution depends on hidden variables and enables the
proximal OPE construction that follows.


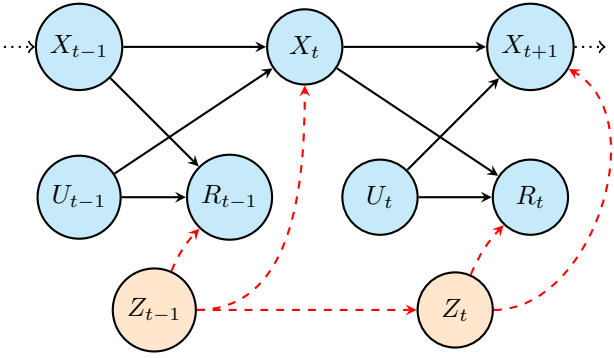
\begin{figure}
\centering
\begin{tikzpicture}[
    node distance=2.5cm and 3cm,
    every node/.style={circle,draw,minimum size=1cm,thick},
    state/.style={fill=cyan!20},
    context/.style={fill=orange!20},
    trans/.style={->,>=stealth,thick},
    influence/.style={->,>=stealth,dashed,red,thick},
    justdot/.style={->,dotted,thick}
]
\node[state] (X_t-1) at (0,0) {$X_{t-1}$};
\node[state] (X_t) at (3,0) {$X_t$};
\node[state] (X_t+1) at (6,0) {$X_{t+1}$};

\node[state] (U_t-1) at (0,-2) {$U_{t-1}$};
\node[state] (R_t-1) at (2,-2) {$R_{t-1}$};
\node[state] (U_t) at (4,-2) {$U_t$};
\node[state] (R_t) at (6,-2) {$R_t$};

\node[context] (Z_t-1) at (1,-3.5) {$Z_{t-1}$};
\node[context] (Z_t) at (5,-3.5) {$Z_t$};

\draw[trans] (X_t-1) -- (X_t);
\draw[trans] (X_t) -- (X_t+1);
\draw[trans] (X_t-1) -- (R_t-1);
\draw[trans] (X_t) -- (R_t);
\draw[trans] (U_t-1) -- (X_t);
\draw[trans] (U_t) -- (X_t+1);
\draw[trans] (U_t-1) -- (R_t-1);
\draw[trans] (U_t) -- (R_t);

\draw[influence] (Z_t-1) to [out=0,in=-90] (X_t);
\draw[influence] (Z_t-1) to [bend left=10] (R_t-1);
\draw[influence] (Z_t-1) to [out=0,in=180] (Z_t);
\draw[influence] (Z_t) to [out=0,in=-30] (X_t+1);
\draw[influence] (Z_t) to [bend left=10] (R_t);

\draw[justdot] ++(-1,0)-- (X_t-1);
\draw[justdot] (X_t+1) -- ++(1,0);
\end{tikzpicture}
\caption{Causal graph of the C-MDP.}
\label{fig:contextual_mdp}
\end{figure}


\subsection{POMDP Representation}

We recast the C-MDP as
a POMDP whose full state at time $t$ is defined by
$S_t = (X_t, Z_t),$
where $X_t$ is the observed component and $Z_t$ is the unobserved
context.
The set of full states is defined as $\mathcal{S}:= \mathcal{X} \times \mathcal{Z}$.
The agent observes only $Y_t = X_t, 
$ with the set of observations given by $\mathcal{Y}:=\mathcal{X}$. At each $t$, the agent receives a reward
\begin{align}
R_t = r(X_t, Z_t, U_t) = r(S_t, U_t).
\end{align}

A complete trajectory and the corresponding observable component up to time $t$ is therefore written as
\begin{align}
\tau_t = (s_{0:t},\, y_{0:t},\, u_{0:t}),
\qquad
\tau_t^o = (y_{0:t},\, u_{0:t})
\end{align}

Any parameterized policy
$\boldsymbol{g}^{\boldsymbol{\theta}}$ induces a probability measure
over the space of such trajectories.
This measure determines the
expected reward term in the Bellman consistency constraint
\eqref{eq:maxent_bellman_Q}.  
Thus, to enforce Bellman consistency for a state-based policy, we
express the reward distribution under the trajectory
distribution generated by $\boldsymbol{g}^{\boldsymbol{\theta}}$ in
terms of the observational data collected under $\boldsymbol{g}^{\boldsymbol{b}}$.

In the following subsections, we show the steps to achieve this via a proximal
OPE formulation.
Essentially, we rewrite the confounded reward expectation using
only observable quantities.  
Before presenting the proximal analysis, we introduce the probabilistic
notation used throughout.

\subsection{Notations}

We represent probabilities over latent states, observations, and rewards
using vector and matrix notation.  
For any realizations $s_t, s_{t+1} \in \mathcal{S}$ and any action
$u_t$, the transition probability $p(s_{t+1} \mid s_t, u_t)$ is
collected into the column vector
\[
P(S_{t+1}\mid s_t, u_t)
=
\bigl( p(s^1_{t+1}\mid s_t, u_t), \ldots,
       p(s^{|\mathcal{S}|}_{t+1}\mid s_t, u_t) \bigr)^{\!\top},
\]
and into the row vector
\[
P(s_{t+1}\mid S_t, u_t)
=
\bigl( p(s_{t+1}\mid s^1_t, u_t), \ldots,
       p(s_{t+1}\mid s^{|\mathcal{S}|}_t, u_t) \bigr).
\]
In our analysis, the multiplication of a pair of vectors of appropriate dimension is their \textit{scalar product}. 
The full transition kernel is expressed as the matrix
\[
P(S_{t+1}\mid S_t, U_t)
\in \mathbb{R}^{|\mathcal{S}|\times|\mathcal{S}|},
\]
with analogous conventions for matrices involving joint variables or
rectangular conditional distributions.
In our analysis, the multiplication of two matrices refers to their \textit{algebraic matrix multiplication}.
Our analysis also makes use of the conditional independencies among variables in the C-MDP, based on their causal relationship illustrated in Fig. \ref{fig:contextual_mdp}. 
As an example, the causal relationship corresponding to the Markovian evolution of the state given by
\[
p(s_{t+1} \mid s_{0:t}, u_{0:t})
=
p(s_{t+1}\mid s_t, u_t)
\]
is denoted compactly by
\[
s_{t+1} \perp\!\!\!\!\perp (s_{0:t-1}, u_{0:t-1})
\mid (s_t, u_t).
\]

For any policy $\boldsymbol{g}^{\boldsymbol{i}}, \boldsymbol{i}\in \{\boldsymbol{b},\boldsymbol{\theta}\}$, the distribution induced by $\boldsymbol{g}^{\boldsymbol{i}}$ on any tuple of random variables is denoted by a $p^{\boldsymbol{i}}(\cdot)$.

These conventions allow us to express the reward distribution as matrix products, which is crucial for the proximal OPE construction.

\subsection{OPE to Compute the Bellman Equation}

We now show how the Bellman equation in Problem~\ref{prob:maxcausalent} 
can be evaluated from the restricted dataset $\mathcal{D}^{\boldsymbol{b}}$ 
by using the POMDP formulation introduced earlier.  
Recall that the full state is $S_t = (X_t,Z_t)$, the observation is 
$Y_t = X_t$, and the reward is $R_t = r(S_t,U_t)$.  
Each policy $\boldsymbol{g}^{\boldsymbol{\theta}}$ induces a probability 
measure on full trajectories 
$\tau_t = (s_{0:t}, y_{0:t}, u_{0:t})$, 
while the dataset contains only 
$\tau_t^o = (y_{0:t}, u_{0:t})$.  
The expectations in the Bellman equation arise from this 
trajectory distribution, but the hidden context $Z_t$ makes the reward component confounded.
Recall the Bellman consistency constraint  
\begin{align}
    Q_t^{\boldsymbol{\theta},\boldsymbol{\phi}}(x_t,u_t)
    &= 
    \mathbb{E}^{\boldsymbol{\theta}}
    \!\left[
        r(X_t,Z_t,U_t)
        \,\big|\, X_t=x_t,U_t=u_t
    \right] \nonumber\\[2pt]
    &\quad +
    \mathbb{E}^{\boldsymbol{\phi}}
    \!\left[
        V_{t+1}^{\boldsymbol{\theta},\boldsymbol{\phi}}(X_{t+1})
        \,\big|\, X_t=x_t,U_t=u_t
    \right],
    \label{eq:Bellman_consistent_Q_function}
\end{align}
For a parameterized policy $\boldsymbol{g}^{\boldsymbol{\theta}}$, the second term depends only on the surrogate transition model 
$p^{\boldsymbol{\phi}}(x_{t+1}\mid x_t,u_t)$ and can be computed from 
observable transitions $(x_t,u_t,x_{t+1})$ in the dataset.

The first term,
$\mathbb{E}^{\boldsymbol{\theta}}[r(X_t,Z_t,U_t)\mid x_t,u_t]$,
depends on the unobserved context $Z_t$ and is therefore not identifiable 
by data-based averaging.  
Let $\mathcal{R}$ denote the finite set of possible reward realizations. 
Thus, we can express
\begin{align}\mathbb{E}^{\boldsymbol{\theta}} \left[ r(X_t,Z_t, U_t)\right]= \mathbb{E}^{\boldsymbol{\theta}} \left[ R_t\right]=\sum_{r_t\in \mathcal{R}} r_t \cdot p^{\boldsymbol{\theta}}(r_t).
\end{align}
We consider $p^{\boldsymbol{\theta}}(r_t)$ and use the law of total probability and Bayes' theorem to get
\begin{align}
p^{\boldsymbol{\theta}}(r_t)&=\sum_{\tau_{t}\in \mathcal{T}_{t}}\;p^{\boldsymbol{\theta}}(r_t|\tau_{t})\; p^{\boldsymbol{\theta}}(\tau_{t}),\\
&=\sum_{\tau_{t}\in \mathcal{T}_{t}}\;p^{\boldsymbol{\theta}}(r_t|s_{t},u_t)\;p^{\boldsymbol{\theta}}(\tau_{t}),\label{p_e_r_ierm2}
\end{align}
where, in the last equality, we use the conditional independence $r_{t} \perp\!\!\!\!\perp \tau_{t-1} | (s_{t},u_t)$ between the reward $r_t$ and past trajectory $\tau_{t-1}$.

Thus, in the POMDP representation, the reward component of the Bellman equation is confounded, while the future-value term remains identifiable 
from observable transitions.  
To proceed with model learning, we therefore require a method to express 
$\mathbb{E}^{\theta}[r(X_t,Z_t,U_t)\mid x_t,u_t]$ using only the observable 
trajectory prefixes $\tau_t^o$ available in $D^{\boldsymbol{b}}$.  
This is precisely where proximal learning becomes essential.  
Proximal OPE provides a method to replace the confounded reward expectation with observable quantities from data and a sequence of weight matrices that are proxy-based.
The past and current observations are used as proxies to recover the average effect of the hidden context on rewards without 
ever observing $Z_t$.

\subsection{Proximal Identification of the Confounded Reward Term}
\label{subsec:proximal_identification}

In this subsection, we extend the proximal learning framework of \cite{tennenholtz2020off} to the POMDP formulation developed above. Our objective is to express the confounded reward distribution in \eqref{p_e_r_ierm2}, originally defined with respect to the full latent state, in an equivalent form that depends only on observable variables. This reformulation is essential to ensure that the Bellman operator used in Problem~\ref{prob:maxcausalent} can be evaluated without access to the hidden context $Z_t$. To this end, we establish a sequence of lemmas that progressively eliminate the dependence on the full state $S_t$, thereby yielding a reward model identifiable from observational data. Throughout this subsection, we assume the following assumptions hold.

\begin{assumption}\label{assm_extraobs}
Access is available to an auxiliary ``null'' observation $Y_N$, distributed according to the prior over the initial observation $Y_0$. This variable serves as a proxy for the latent initial state $S_0$ and enables identification of the confounded reward distribution at $t = 0$.
\end{assumption}

\begin{assumption}\label{assm_tnvrtble}
For each $t=1,\dots,T$, the matrices  
$P^{\boldsymbol{b}}(Y_t \mid Y_{t-1}, u_t)$ and 
$P^{\boldsymbol{b}}(S_t \mid Y_{t-1}, u_t)$ 
are invertible.
\end{assumption}

This invertibility ensures that past and present observations carry 
sufficient information about the latent state $S_t$ so that hidden quantities can be inferred using proxy variables. 

\begin{remark}
We emphasize that Assumption~\ref{assm_tnvrtble} does \emph{not} hold in settings where the context $Z_t$ is resampled independently at each time step.  
In such cases, there is no temporal carryover of contextual information. Hence, the past observations cannot contain the proxy signal required to recover $Z_t$. 
\end{remark}

Each lemma isolates a structural step in the transformation from the original reward expansion to a fully observable representation.

\vspace{2mm}
\noindent\textbf{Step 1: Factorizing the trajectory distribution.}
We first characterize the trajectory distribution to distinguish policy-dependent factors from those governed exclusively by the system dynamics.
This separation will subsequently allow the application of proximal methods to the terms affected by unobserved variables.

\begin{lemma}[Expansion of the Policy--Dependent Trajectory Distribution]
\label{lem:traj_factorization}
At each $t$, for any trajectory $\tau_t = (u_{0:t}, y_{0:t}, s_{0:t})$, 
the policy-induced trajectory distribution satisfies
\begin{align}
\label{eq:lemma1_factorization}
\nonumber
p^{\boldsymbol{\theta}}(\tau_t)
&=
\Bigg( \prod_{k=0}^{t} p^{\boldsymbol{\theta}}(u_k \mid y_k) \Bigg)
\Bigg( \prod_{k=0}^{t} p^{\boldsymbol{\theta}}(y_k \mid s_k) \Bigg) \\
&\quad\cdot
\Bigg( \prod_{k=0}^{t-1} p^{\boldsymbol{\theta}}(s_{k+1} \mid s_k, u_k) \Bigg)
p^{\boldsymbol{\theta}}(s_0).
\end{align}
\end{lemma}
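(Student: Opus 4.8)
The plan is to derive the factorization directly from the chain rule of probability, applied in the temporal (topological) order of the causal graph in Fig.~\ref{fig:contextual_mdp}, and then to collapse each conditional factor using the conditional independencies encoded by that DAG. Writing the trajectory in causal order $(s_0, y_0, u_0, s_1, y_1, u_1, \ldots, s_t, y_t, u_t)$ and grouping the resulting conditional factors by type, the chain rule gives
\begin{align*}
p^{\boldsymbol{\theta}}(\tau_t)
= p^{\boldsymbol{\theta}}(s_0)
\Bigg(\prod_{k=0}^{t} p^{\boldsymbol{\theta}}(u_k \mid h^u_k)\Bigg)
\Bigg(\prod_{k=0}^{t} p^{\boldsymbol{\theta}}(y_k \mid h^y_k)\Bigg)
\Bigg(\prod_{k=1}^{t} p^{\boldsymbol{\theta}}(s_k \mid h^s_k)\Bigg),
\end{align*}
where $h^u_k = (s_{0:k}, y_{0:k}, u_{0:k-1})$, $h^y_k = (s_{0:k}, y_{0:k-1}, u_{0:k-1})$, and $h^s_k = (s_{0:k-1}, y_{0:k-1}, u_{0:k-1})$ denote the respective preceding histories. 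Since these are numerical factors, regrouping them by type is harmless.

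The three reductions I would then invoke, each a consequence of the DAG, are: (i) the \emph{state-based policy} property $p^{\boldsymbol{\theta}}(u_k \mid h^u_k) = p^{\boldsymbol{\theta}}(u_k \mid y_k)$, reflecting that under $\boldsymbol{g}^{\boldsymbol{\theta}}$ the action depends only on the current observation (the arrow $Z_k \rightarrow U_k$ is absent); (ii) the \emph{emission} property $p^{\boldsymbol{\theta}}(y_k \mid h^y_k) = p^{\boldsymbol{\theta}}(y_k \mid s_k)$, since $Y_k = X_k$ is a function of $S_k = (X_k, Z_k)$; and (iii) the \emph{Markov transition} property $p^{\boldsymbol{\theta}}(s_k \mid h^s_k) = p^{\boldsymbol{\theta}}(s_k \mid s_{k-1}, u_{k-1})$, which holds because $(X_t, Z_t)$ is Markov with respect to $(X_t, Z_t, U_t)$. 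Substituting (i)--(iii) into the display and reindexing the transition product via $k \mapsto k+1$ (so that $\prod_{k=1}^{t} p^{\boldsymbol{\theta}}(s_k \mid s_{k-1}, u_{k-1}) = \prod_{k=0}^{t-1} p^{\boldsymbol{\theta}}(s_{k+1} \mid s_k, u_k)$) yields exactly \eqref{eq:lemma1_factorization}.

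The main obstacle is not computational but conceptual: I must justify that the emission and transition kernels appearing under $p^{\boldsymbol{\theta}}$ are the same \emph{environment} kernels regardless of the acting policy, so that replacing $\boldsymbol{g}^{\boldsymbol{b}}$ by $\boldsymbol{g}^{\boldsymbol{\theta}}$ modifies only factor (i) while leaving (ii) and (iii) invariant. This is precisely the intervention viewpoint established in the SCM discussion: parameterizing the policy amounts to surgery on the mechanism generating $U_k$ and does not alter $P_X$, $P_Z$, or the observation map. A secondary point to verify is that the chain-rule ordering respects the topological order of the DAG, so that each conditioning set $h^{\bullet}_k$ contains only non-descendants of the conditioned variable and the d-separation reductions (i)--(iii) are licensed. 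An equivalent and perhaps cleaner route is induction on $t$: the base case $t=0$ gives $p^{\boldsymbol{\theta}}(s_0, y_0, u_0) = p^{\boldsymbol{\theta}}(u_0 \mid y_0)\,p^{\boldsymbol{\theta}}(y_0 \mid s_0)\,p^{\boldsymbol{\theta}}(s_0)$, and the inductive step appends the three factors for time $t$ using exactly (i)--(iii).
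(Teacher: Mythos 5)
Your proposal is correct and follows essentially the same route as the paper: the paper's proof is precisely the backward recursion (induction on $t$) you mention at the end, peeling off $p^{\boldsymbol{\theta}}(u_t\mid y_t)$, $p^{\boldsymbol{\theta}}(y_t\mid s_t)$, and $p^{\boldsymbol{\theta}}(s_t\mid s_{t-1},u_{t-1})$ at each step using exactly your reductions (i)--(iii). The forward chain-rule presentation and the invariance/intervention discussion are fine but add nothing beyond the paper's argument.
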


\begin{proof}
\begin{align}
    &\nonumber p^{\boldsymbol{\theta}}(\tau_{t})\\
    =&\;p^{\boldsymbol{\theta}}(u_{t}\mid y_{t},s_{t},\tau_{t-1})\;
       p^{\boldsymbol{\theta}}(y_{t},s_{t},\tau_{t-1}),\\
    =&\;p^{\boldsymbol{\theta}}(u_{t}\mid y_{t})\;
       p^{\boldsymbol{\theta}}(y_{t}\mid s_{t},\tau_{t-1})\;
       p^{\boldsymbol{\theta}}(s_{t},\tau_{t-1}),\label{p^e_tau_action}\\
    =&\;p^{\boldsymbol{\theta}}(u_{t}\mid y_{t})\;
       p^{\boldsymbol{\theta}}(y_{t}\mid s_{t})\;
       p^{\boldsymbol{\theta}}(s_{t}\mid\tau_{t-1})\;
       p^{\boldsymbol{\theta}}(\tau_{t-1}),\label{p^e_tau_obs}\\
    =&\;p^{\boldsymbol{\theta}}(u_{t}\mid y_{t})\;
       p^{\boldsymbol{\theta}}(y_{t}\mid s_{t})\;
       p^{\boldsymbol{\theta}}(s_{t}\mid s_{t-1},u_{t-1})\;
       p^{\boldsymbol{\theta}}(\tau_{t-1}),\label{p^e_tau_state}\\[2mm]
    \nonumber=&\;\Pi_{k=0}^{t}\,p^{\boldsymbol{\theta}}(u_k\mid y_k)\;
               \Pi_{k=0}^{t}\,p^{\boldsymbol{\theta}}(y_k\mid s_k)\;\\
    &\hspace{60pt}\cdot
      \Pi_{k=0}^{t-1}\,p^{\boldsymbol{\theta}}(s_{k+1}\mid s_k,u_k)\;
      p^{\boldsymbol{\theta}}(s_{0}),\label{p_e_r_ierm2_exp}
\end{align}
where, in \eqref{p^e_tau_action}, we use the property that the parameterized policy depends only on the current observation. 
In \eqref{p^e_tau_obs} we use the conditional independence 
$y_t \perp\!\!\!\!\perp \tau_{t-1}\mid s_t$ as the observation is dependent purely on the state.
Lastly, in \eqref{p^e_tau_state} we use the Markovian nature of the evolution of the full state.
\end{proof}

\vspace{1.5mm}
\noindent\textbf{Step 2: Decomposing the reward distribution.}
Next, we substitute Lemma~\ref{lem:traj_factorization} into the reward 
expansion and isolate all policy-dependent terms.

\begin{lemma}[Reward Distribution Decomposition]
\label{lem:reward_decomposition}
The reward distribution satisfies
\begin{align}
\label{eq:lemma2_reward_final}
\nonumber
p^{\boldsymbol{\theta}}(r_t)
&=
\sum_{\tau_t}
\Bigg( \prod_{k=0}^{t} p^{\boldsymbol{\theta}}(u_k \mid y_k) \Bigg)
\Bigg( \prod_{k=0}^{t} p^{\boldsymbol{b}}(y_k \mid s_k) \Bigg) \\
&\quad\cdot
p^{\boldsymbol{b}}(r_t \mid s_t, u_t)
\Bigg( \prod_{k=0}^{t-1}
p^{\boldsymbol{b}}(s_{k+1}, y_k \mid s_k, u_k) \Bigg)
p^{\boldsymbol{b}}(s_0),
\end{align}
and equivalently,
\begin{align}
\label{eq:lemma2_reward_vector}
\nonumber
p^{\boldsymbol{\theta}}(r_t)
&=
\sum_{\tau_t^o}
\Bigg( \prod_{k=0}^{t} p^{\boldsymbol{\theta}}(u_k \mid y_k) \Bigg)
p^{\boldsymbol{b}}(r_t, y_t \mid S_t, u_t) \\
&\quad\cdot
\Bigg( \prod_{k=0}^{t-1}
p^{\boldsymbol{b}}(S_{k+1}, y_k \mid S_k, u_k) \Bigg)
p^{\boldsymbol{b}}(S_0).
\end{align}
\end{lemma}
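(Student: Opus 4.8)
The plan is to start from the reward expansion in \eqref{p_e_r_ierm2}, namely $p^{\boldsymbol{\theta}}(r_t) = \sum_{\tau_t} p^{\boldsymbol{\theta}}(r_t \mid s_t, u_t)\, p^{\boldsymbol{\theta}}(\tau_t)$, and substitute the factorization of $p^{\boldsymbol{\theta}}(\tau_t)$ established in Lemma~\ref{lem:traj_factorization}. This immediately exposes every factor of the summand as either policy-dependent or policy-independent, and the remaining work is to reclassify and regroup these factors.

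The conceptual heart of the lemma is the observation that, in the POMDP model, only the action factors $p^{\boldsymbol{\theta}}(u_k \mid y_k)$ actually depend on the policy. The emission $p(y_k \mid s_k)$ is the deterministic read-out $Y_k = X_k$; the full-state transition $p(s_{k+1} \mid s_k, u_k)$ is governed by the fixed kernels $P_X, P_Z$; the reward factor $p(r_t \mid s_t, u_t)$ is the deterministic map $r(S_t,U_t)$; and $p(s_0)$ is the fixed initial prior determined by $\eta$ and $\nu$. Each of these is intrinsic to the environment and hence identical under $\boldsymbol{g}^{\boldsymbol{b}}$ and $\boldsymbol{g}^{\boldsymbol{\theta}}$. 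I would state this policy-invariance explicitly and use it to replace every non-action superscript $\boldsymbol{\theta}$ by $\boldsymbol{b}$, which yields the first claimed form \eqref{eq:lemma2_reward_final}.

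To reach this form I would then merge adjacent dynamics factors into the joint transition--emission kernels. Since $y_k$ is a deterministic function of $s_k$, it is conditionally independent of $s_{k+1}$ given $(s_k,u_k)$, so $p^{\boldsymbol{b}}(s_{k+1} \mid s_k,u_k)\, p^{\boldsymbol{b}}(y_k \mid s_k) = p^{\boldsymbol{b}}(s_{k+1}, y_k \mid s_k, u_k)$ for $k = 0,\dots,t-1$; the same independence at the terminal step gives $p^{\boldsymbol{b}}(r_t \mid s_t,u_t)\, p^{\boldsymbol{b}}(y_t \mid s_t) = p^{\boldsymbol{b}}(r_t, y_t \mid s_t, u_t)$. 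Careful index bookkeeping here is the main source of friction: the emission product runs over $k = 0,\dots,t$ while the transition product runs only over $k = 0,\dots,t-1$, so exactly one emission factor (the terminal one) is left over to combine with the reward kernel.

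Finally, to obtain the compact form \eqref{eq:lemma2_reward_vector}, I would split the sum over full trajectories $\tau_t$ into the observable part $\tau_t^o = (y_{0:t}, u_{0:t})$ and the hidden states $s_{0:t}$, and perform the latter sums in closed form. Reading $p^{\boldsymbol{b}}(S_0)$ as a column vector over $\mathcal{S}$, each $p^{\boldsymbol{b}}(S_{k+1}, y_k \mid S_k, u_k)$ as a matrix that propagates the latent state while emitting $y_k$, and $p^{\boldsymbol{b}}(r_t, y_t \mid S_t, u_t)$ as a row vector, the chained marginalization over $s_{0:t}$ is precisely the algebraic matrix product of these objects under the vector/matrix conventions of the Notations subsection. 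This leaves only the observable sum $\sum_{\tau_t^o}$ together with the policy-dependent action factors outside the product, establishing \eqref{eq:lemma2_reward_vector}. The principal subtlety to verify is the orientation and ordering of these factors, so that the matrix product contracts the correct hidden indices in the correct temporal sequence.
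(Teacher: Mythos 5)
Your proposal is correct and follows essentially the same route as the paper's proof: substitute the Lemma~\ref{lem:traj_factorization} factorization into the reward expansion \eqref{p_e_r_ierm2}, invoke policy-invariance of the emission, transition, reward, and initial-state factors to replace $p^{\boldsymbol{\theta}}(\cdot)$ with $p^{\boldsymbol{b}}(\cdot)$, and then regroup into joint transition--observation kernels written as matrix products over the latent states. Your additional care with the index bookkeeping (the leftover terminal emission merging with the reward kernel) and the explicit conditional-independence justification for forming $p^{\boldsymbol{b}}(s_{k+1}, y_k \mid s_k, u_k)$ only make the argument more complete than the paper's version.
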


\begin{proof}
We begin by noting a key implication of Lemma \ref{lem:traj_factorization}.
Among the factors that appear in $p^{\boldsymbol{\theta}}(\tau_t)$, the 
only quantities that depend on the parameterized policy 
$\boldsymbol{g}^{\boldsymbol{\theta}}$ are the action probabilities 
$p^{\boldsymbol{\theta}}(u_k \mid y_k)$.  
In contrast, the observation model 
$p^{\boldsymbol{\theta}}(y_k \mid s_k)$ and the state transition model 
$p^{\boldsymbol{\theta}}(s_{k+1} \mid s_k, u_k)$
are structural properties of the underlying system and therefore do not change with the choice of policy.  
Hence, we can evaluate all the policy-independent terms based on any data-generating policy.
This invariance is crucial, as it allows us to substitute these terms with their behavioral counterparts 
$p^{\boldsymbol{b}}(\cdot)$ as we expand the reward distribution.

Starting from
\begin{align}
p^{\boldsymbol{\theta}}(r_t)
=
\sum_{\tau_t} 
p^{\boldsymbol{\theta}}(r_t\mid s_t,u_t)\,
p^{\boldsymbol{\theta}}(\tau_t), \end{align}
we substitute the factorization of Lemma \ref{lem:traj_factorization}.  
As a result of the invariance, at each $k=0, \ldots, t-1$, we rewrite
\begin{align}
p^{\boldsymbol{\theta}}(y_k\mid s_k)
&= p^{\boldsymbol{b}}(y_k\mid s_k),\\
p^{\boldsymbol{\theta}}(s_{k+1}\mid s_k,u_k)
&= p^{\boldsymbol{b}}(s_{k+1}\mid s_k,u_k).
\end{align}

Similarly, $p^{\boldsymbol{\theta}}(r_t\mid s_t,u_t)=p^{\boldsymbol{b}}(r_t\mid s_t,u_t)$ and $p^{\boldsymbol{\theta}}(s_0)=p^{\boldsymbol{b}}(s_0)$.
We substitute these behavioral policy-based terms and collect the factors to produce
\eqref{eq:lemma2_reward_final}.  
Finally, using our vector and matrix notation, we group each transition-observation pair into 
$p^{\boldsymbol{b}}(S_{k+1}, y_k\mid S_k,u_k)$ to produce 
\eqref{eq:lemma2_reward_vector}.
\end{proof}

At this point, the decomposition isolates the core challenge, which is the terms 
that still depend on the latent 
state $S_k$.
The next two lemmas show how these quantities can be rewritten using 
observable proxies.  
This removes all dependence on the unobserved context.

\vspace{2mm}
\noindent\textbf{Step 3: Introducing proxy variables.}
Next, we show that past and present observations $(Y_{t-1},Y_t)$ form valid 
proxies for $S_t$ and allow us to rewrite the terms based on the full-state.

\begin{lemma}[First Proxy: Incorporating $Y_{t-1}$]
\label{lem:first_proxy}
The matrix $P^{\boldsymbol{b}}(S_{t+1}, y_t \mid S_t, u_t)$ satisfies
\begin{align}
\label{eq:first_proxy}
\nonumber P^{\boldsymbol{b}}(S_{t+1}, &y_t \mid S_t, u_t)
\\
=&
P^{\boldsymbol{b}}(S_{t+1}, y_t \mid Y_{t-1}, u_t)\,
P^{\boldsymbol{b}}(S_t \mid Y_{t-1}, u_t)^{-1}.
\end{align}
\end{lemma}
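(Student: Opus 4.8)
The plan is to prove the stated identity by first establishing its ``un-inverted'' counterpart, which is linear in the target matrix, and only then isolating $P^{\boldsymbol{b}}(S_{t+1}, y_t \mid S_t, u_t)$ using the invertibility granted by Assumption~\ref{assm_tnvrtble}. Since $P^{\boldsymbol{b}}(S_t \mid Y_{t-1}, u_t)$ is invertible, the claimed equation \eqref{eq:first_proxy} is equivalent, after right-multiplying both sides by $P^{\boldsymbol{b}}(S_t \mid Y_{t-1}, u_t)$, to the marginalization identity
\[
\begin{aligned}
&P^{\boldsymbol{b}}(S_{t+1}, y_t \mid S_t, u_t)\,
P^{\boldsymbol{b}}(S_t \mid Y_{t-1}, u_t) \\
&\quad = P^{\boldsymbol{b}}(S_{t+1}, y_t \mid Y_{t-1}, u_t).
\end{aligned}
\]
So it suffices to prove this equation and then multiply through by the inverse.

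First I would verify the marginalization identity entrywise, using the row/column conventions fixed in the Notations subsection. Reading the $(s', y_{t-1})$ entry of the matrix product, the left-hand side is $\sum_{s_t} p^{\boldsymbol{b}}(s', y_t \mid s_t, u_t)\, p^{\boldsymbol{b}}(s_t \mid y_{t-1}, u_t)$, where the contracted (inner) index is $S_t$ and the sum ranges over $s_t \in \mathcal{S}$. I would compare this against the law-of-total-probability expansion of the right-hand entry, namely $\sum_{s_t} p^{\boldsymbol{b}}(s', y_t \mid s_t, y_{t-1}, u_t)\, p^{\boldsymbol{b}}(s_t \mid y_{t-1}, u_t)$. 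The two agree provided $p^{\boldsymbol{b}}(s', y_t \mid s_t, y_{t-1}, u_t) = p^{\boldsymbol{b}}(s', y_t \mid s_t, u_t)$.

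The crux is therefore the conditional independence $(S_{t+1}, Y_t) \perp\!\!\!\!\perp Y_{t-1} \mid (S_t, U_t)$, which I would read off the causal structure of Fig.~\ref{fig:contextual_mdp}. The parents of $S_{t+1} = (X_{t+1}, Z_{t+1})$ are $(X_t, Z_t, U_t) = (S_t, U_t)$, all of which lie in the conditioning set, so $S_{t+1}$ is independent of the non-descendant $Y_{t-1} = X_{t-1}$ by the local Markov property; and $Y_t = X_t$ is a deterministic component of $S_t$, hence trivially independent of $Y_{t-1}$ once $S_t$ is fixed. Crucially, this is a property of the dynamics, not of the policy, so it holds under $\boldsymbol{b}$ exactly as needed. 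With the identity established, right-multiplying by $P^{\boldsymbol{b}}(S_t \mid Y_{t-1}, u_t)^{-1}$ gives \eqref{eq:first_proxy}.

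The hard part will not be the algebra but the well-posedness of the inverse: for $P^{\boldsymbol{b}}(S_t \mid Y_{t-1}, u_t)^{-1}$ to exist as written, Assumption~\ref{assm_tnvrtble} must be read as forcing this matrix to be square and full rank, which in turn requires the proxy $Y_{t-1}$ to resolve as many distinguishable values as the latent state $S_t$. I would make this dimensional requirement explicit and confirm that the marginalization step contracts exactly the $S_t$ index, so that the reconstructed matrix inherits the intended structure — rows indexed by $S_{t+1}$ with $y_t$ held fixed, columns indexed by $S_t$. The remaining care is purely bookkeeping of which index is summed in each matrix product, tracked via the stated row/column conventions.
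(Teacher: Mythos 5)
Your proof is correct and follows essentially the same route as the paper's: establish the marginalization identity $P^{\boldsymbol{b}}(S_{t+1}, y_t \mid Y_{t-1}, u_t) = P^{\boldsymbol{b}}(S_{t+1}, y_t \mid S_t, u_t)\,P^{\boldsymbol{b}}(S_t \mid Y_{t-1}, u_t)$ via the law of total probability and the conditional independence $(S_{t+1},Y_t)\perp\!\!\!\!\perp Y_{t-1}\mid(S_t,U_t)$, then right-multiply by the inverse guaranteed by Assumption~\ref{assm_tnvrtble}. Your additional remarks — justifying the conditional independence from the causal graph, noting its policy-invariance, and flagging that the inverse requires $P^{\boldsymbol{b}}(S_t \mid Y_{t-1}, u_t)$ to be square (so $|\mathcal{Y}| = |\mathcal{S}|$ effectively, a nontrivial requirement the paper leaves implicit in the assumption) — are sound and slightly more careful than the paper's own presentation.
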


\begin{proof}
Using the law of total probability,
\begin{align}
\nonumber
P^{\boldsymbol{b}}&(S_{t+1}, y_t \mid Y_{t-1}, u_t)\\
&=
P^{\boldsymbol{b}}(S_{t+1}, y_t \mid S_t, Y_{t-1}, u_t)\,
P^{\boldsymbol{b}}(S_t \mid Y_{t-1}, u_t),\\
&=
P^{\boldsymbol{b}}(S_{t+1}, y_t \mid S_t, u_t)\,
P^{\boldsymbol{b}}(S_t \mid Y_{t-1}, u_t),
\end{align}
where we use conditional independence 
$(S_{t+1},y_t)\perp\!\!\!\!\perp Y_{t-1} \mid (S_t,u_t)$.  
Under Assumption~\ref{assm_tnvrtble}, we right-multiply $P^{\boldsymbol{b}}(S_t \mid Y_{t-1}, u_t)^{-1}$ on both sides to obtain the desired expression.
\end{proof}

Thus, $Y_{t-1}$ can be incorporated as a proxy for $S_t$.  
To complete the construction, we also need to express $P^{\boldsymbol{b}}(S_t\mid 
Y_{t-1},u_t)$ using $Y_t$.

\begin{lemma}[Second Proxy: Incorporating the current observation $Y_t$]
\label{lem:second_proxy}
The following expression holds:
\begin{align}
P^{\boldsymbol{b}}(S_t \mid Y_{t-1}, u_t)
=
P^{\boldsymbol{b}}(Y_t \mid S_t, u_t)^{-1}\,
P^{\boldsymbol{b}}(Y_t \mid Y_{t-1}, u_t).
\end{align}
\end{lemma}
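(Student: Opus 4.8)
The plan is to mirror the structure of the proof of Lemma~\ref{lem:first_proxy}: derive a single matrix identity from the law of total probability combined with one conditional independence, and then invert. First I would start from the fully observable quantity $P^{\boldsymbol{b}}(Y_t \mid Y_{t-1}, u_t)$ and expand it by marginalizing over the latent full state $S_t$, writing
\begin{align}
p^{\boldsymbol{b}}(y_t \mid y_{t-1}, u_t)
= \sum_{s_t} p^{\boldsymbol{b}}(y_t \mid s_t, y_{t-1}, u_t)\,
p^{\boldsymbol{b}}(s_t \mid y_{t-1}, u_t).
\end{align}

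Next I would simplify the emission factor by exploiting that the observation is a measurable function of the full state, namely $Y_t = X_t$ with $S_t = (X_t, Z_t)$. This yields the conditional independence $Y_t \perp\!\!\!\!\perp Y_{t-1} \mid (S_t, u_t)$, so that $p^{\boldsymbol{b}}(y_t \mid s_t, y_{t-1}, u_t) = p^{\boldsymbol{b}}(y_t \mid s_t, u_t)$. Substituting this and reading the sum over $s_t$ as the scalar product of a row of $P^{\boldsymbol{b}}(Y_t \mid S_t, u_t)$ with a column of $P^{\boldsymbol{b}}(S_t \mid Y_{t-1}, u_t)$, the identity becomes the matrix factorization
\begin{align}
P^{\boldsymbol{b}}(Y_t \mid Y_{t-1}, u_t)
= P^{\boldsymbol{b}}(Y_t \mid S_t, u_t)\,
P^{\boldsymbol{b}}(S_t \mid Y_{t-1}, u_t).
\end{align}
I would then solve for the target matrix by left-multiplying with $P^{\boldsymbol{b}}(Y_t \mid S_t, u_t)^{-1}$, which produces the claimed expression.

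The step requiring the most care is justifying that this inverse exists, since Assumption~\ref{assm_tnvrtble} directly asserts invertibility only of $P^{\boldsymbol{b}}(Y_t \mid Y_{t-1}, u_t)$ and $P^{\boldsymbol{b}}(S_t \mid Y_{t-1}, u_t)$, and not of $P^{\boldsymbol{b}}(Y_t \mid S_t, u_t)$. I would close this gap using the factorization itself: rearranging gives $P^{\boldsymbol{b}}(Y_t \mid S_t, u_t) = P^{\boldsymbol{b}}(Y_t \mid Y_{t-1}, u_t)\, P^{\boldsymbol{b}}(S_t \mid Y_{t-1}, u_t)^{-1}$, which exhibits $P^{\boldsymbol{b}}(Y_t \mid S_t, u_t)$ as the product of an invertible matrix and the inverse of an invertible matrix, hence itself invertible. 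This legitimizes the left-multiplication and completes the proof. I would also remark that this is exactly the point where Assumption~\ref{assm_tnvrtble} does its work, encoding that past and present observations are informative enough proxies to pin down the latent state $S_t$.
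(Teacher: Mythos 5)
Your proposal is correct and follows essentially the same route as the paper: expand $P^{\boldsymbol{b}}(Y_t \mid Y_{t-1}, u_t)$ by the law of total probability over $S_t$, apply the conditional independence $Y_t \perp\!\!\!\!\perp Y_{t-1} \mid (S_t, u_t)$, and invert $P^{\boldsymbol{b}}(Y_t \mid S_t, u_t)$. Your additional argument that this inverse exists---deducing invertibility of $P^{\boldsymbol{b}}(Y_t \mid S_t, u_t)$ from the factorization together with the two matrices named in Assumption~\ref{assm_tnvrtble}---is a welcome patch, since the paper invokes that inverse without explicitly justifying it.
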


\begin{proof}
From the law of total probability,
\begin{align}
P^{\boldsymbol{b}}(Y_t \mid Y_{t-1}, u_t)
=
P^{\boldsymbol{b}}(Y_t \mid S_t, u_t)\,
P^{\boldsymbol{b}}(S_t \mid Y_{t-1}, u_t).
\end{align}
Under Assumption \ref{assm_tnvrtble}, we invert $P^{\boldsymbol{b}}(Y_t \mid S_t,u_t)$ 
to obtain the desired expression.
\end{proof}


\vspace{2mm}
\noindent\textbf{Step 5: Completing the proximal identification.}
Next, we combine all previous steps to express the confounded reward 
distribution using only observable matrices and the evaluation policy.

\begin{theorem}[Proximal Identification of the Confounded Reward Term]
\label{thm:proximal_reward_identification}
Under Assumptions~\ref{assm_extraobs} and~\ref{assm_tnvrtble},
\begin{align}
\label{eq:main_theorem_proximal}
\nonumber
p^{\boldsymbol{\theta}}(r_t)\hspace{25pt}\\
\hspace{-1pt}\nonumber=
\sum_{\tau_t^o}
\Bigg( \prod_{k=0}^{t} &p^{\boldsymbol{\theta}}(u_k \mid y_k) \Bigg)\\
&\cdot p^{\boldsymbol{b}}(r_t, y_{t} \mid Y_{t-1}, u_{t})\cdot
\Bigg( \prod_{k=0}^{t}
W_{k}(\tau^o_{k})\Bigg).
\end{align}
where, for each $k=1,\ldots,t$, the weight matrix is given by
\begin{align}
\nonumber&W_k(\tau^o_k)\\
&=
P^{\boldsymbol{b}}(Y_k \mid Y_{k-1}, u_k)^{-1}
\cdot
P^{\boldsymbol{b}}(Y_k, y_{k-1} \mid Y_{k-2}, u_{k-1}),
\end{align}
and for $k=0$ we set
\begin{align}
W_0(\tau^o_0)
=
P^{\boldsymbol{b}}(Y_0 \mid u_0, Y_{N})^{-1}
\cdot
P^{\boldsymbol{b}}(Y_0),
\end{align}
with $Y_{N}$ denoting the null observation associated with the prior 
in Assumption~\ref{assm_extraobs}.
Hence, the confounded reward expectation 
$\mathbb{E}^{\boldsymbol{\theta}}[r_t]$ is identifiable from observable 
data.
\end{theorem}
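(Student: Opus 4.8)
The plan is to start from the matrix form of the reward distribution in Lemma~\ref{lem:reward_decomposition}, specifically~\eqref{eq:lemma2_reward_vector}, which already separates the policy-dependent action probabilities $\prod_{k=0}^{t} p^{\boldsymbol{\theta}}(u_k \mid y_k)$ from a chain of behavioral matrices: a leading reward--observation row $p^{\boldsymbol{b}}(r_t, y_t \mid S_t, u_t)$, the transition--observation matrices $p^{\boldsymbol{b}}(S_{k+1}, y_k \mid S_k, u_k)$ for $k = 0,\dots,t-1$, and the initial-state column $p^{\boldsymbol{b}}(S_0)$. The whole task is to replace every factor that still depends on the latent state $S_k$ by purely observable matrices, after which the summand depends only on $\tau_t^o$. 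I would carry this out by repeatedly invoking the two proxy identities, Lemma~\ref{lem:first_proxy} and Lemma~\ref{lem:second_proxy}, together with the observation conditional independence, and then exhibiting a telescoping cancellation.

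First I would record a reward analogue of Lemma~\ref{lem:first_proxy}: applying the identical law-of-total-probability argument with $(r_t, y_t)$ in place of $(S_{t+1}, y_t)$, using $(r_t, y_t) \perp\!\!\!\!\perp Y_{t-1} \mid (S_t, u_t)$, yields
\begin{align}
\nonumber
P^{\boldsymbol{b}}(r_t, y_t \mid S_t, u_t)
&= P^{\boldsymbol{b}}(r_t, y_t \mid Y_{t-1}, u_t)\\
&\quad\cdot P^{\boldsymbol{b}}(S_t \mid Y_{t-1}, u_t)^{-1}.
\end{align}
Next, to each interior transition--observation matrix I would apply Lemma~\ref{lem:first_proxy} (with index $k$) and then substitute Lemma~\ref{lem:second_proxy} into the resulting inverse, obtaining
\begin{align}
\nonumber
P^{\boldsymbol{b}}(S_{k+1}, y_k \mid S_k, u_k)
&= P^{\boldsymbol{b}}(S_{k+1}, y_k \mid Y_{k-1}, u_k)\\
\nonumber
&\quad\cdot P^{\boldsymbol{b}}(Y_k \mid Y_{k-1}, u_k)^{-1}\\
&\quad\cdot P^{\boldsymbol{b}}(Y_k \mid S_k, u_k).
\end{align}
Applying Lemma~\ref{lem:second_proxy} to the inverse in the reward analogue gives the same head/inverse/bridge structure for the leading factor. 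Each factor thus splits into an observable ``head,'' an observable inverse, and a single residual ``bridge'' $P^{\boldsymbol{b}}(Y_k \mid S_k, u_k)$ carrying the latent state; Assumption~\ref{assm_tnvrtble} licenses every inversion appearing here.

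The heart of the argument is the telescoping step. I would show that the bridge left over from one term merges with the latent head of the adjacent term. Since the observation depends only on the current state, $P^{\boldsymbol{b}}(Y_k \mid S_k, u_k) = P^{\boldsymbol{b}}(Y_k \mid S_k)$ and $Y_k \perp\!\!\!\!\perp (y_{k-1}, Y_{k-2}, u_{k-1}) \mid S_k$, so the law of total probability collapses the matrix product over $S_k$:
\begin{align}
\nonumber
P^{\boldsymbol{b}}(Y_k \mid S_k, u_k)\,
&P^{\boldsymbol{b}}(S_k, y_{k-1} \mid Y_{k-2}, u_{k-1})\\
&= P^{\boldsymbol{b}}(Y_k, y_{k-1} \mid Y_{k-2}, u_{k-1}),
\end{align}
which is exactly the second factor of $W_k$; paired with the preceding observable inverse $P^{\boldsymbol{b}}(Y_k \mid Y_{k-1}, u_k)^{-1}$ this reproduces $W_k$ verbatim. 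Iterating from the leading reward factor downward telescopes the whole chain into $p^{\boldsymbol{b}}(r_t, y_t \mid Y_{t-1}, u_t)\prod_{k=1}^{t} W_k$, matching~\eqref{eq:main_theorem_proximal}.

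The step I expect to be the main obstacle is the boundary at $k=0$. For interior indices every bridge finds a partner, but the initial factor $P^{\boldsymbol{b}}(S_1, y_0 \mid S_0, u_0)\,P^{\boldsymbol{b}}(S_0)$ has no earlier observation $Y_{-1}$ to serve as a proxy for $S_0$. This is precisely where Assumption~\ref{assm_extraobs} enters: the null observation $Y_N$, drawn from the prior over $Y_0$, stands in for the missing proxy, and I would verify that this substitution reproduces the boundary weight $W_0 = P^{\boldsymbol{b}}(Y_0 \mid u_0, Y_N)^{-1} P^{\boldsymbol{b}}(Y_0)$ and closes the telescoping with no residual latent factor. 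Confirming that the distributional match between $Y_N$ and the prior over $Y_0$ genuinely validates this final substitution, and that the last un-summed latent dependence truly cancels, is the delicate point; the interior telescoping and the matrix-dimension bookkeeping are then routine given Lemmas~\ref{lem:traj_factorization}--\ref{lem:second_proxy}.
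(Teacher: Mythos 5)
Your proposal is correct and follows essentially the same route as the paper's proof: starting from the matrix form in Lemma~\ref{lem:reward_decomposition}, expanding each latent-state factor via Lemmas~\ref{lem:first_proxy} and~\ref{lem:second_proxy} into a head/inverse/bridge structure, collapsing each bridge--head pair into the observable matrix $P^{\boldsymbol{b}}(Y_k, y_{k-1}\mid Y_{k-2}, u_{k-1})$ to form $W_k$, and closing the chain at $k=0$ with the null observation of Assumption~\ref{assm_extraobs}. The point you flag as delicate (the $k=0$ boundary) is handled in the paper exactly as you describe, with $Y_N$ standing in as the proxy for $S_0$.
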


\begin{proof}
Lemma~\ref{lem:reward_decomposition} expresses the reward distribution as
\begin{align}
\nonumber
p^{\boldsymbol{\theta}}(r_t)
=
\sum_{\tau_t^o}
\Bigg( \prod_{k=0}^{t} &p^{\boldsymbol{\theta}}(u_k \mid y_k) \Bigg)
p^{\boldsymbol{b}}(r_t, y_t \mid S_t, u_t)\hspace{50pt} \\
&\quad\cdot
\Bigg( \prod_{k=0}^{t-1}
p^{\boldsymbol{b}}(S_{k+1}, y_k \mid S_k, u_k) \Bigg)
p^{\boldsymbol{b}}(S_0).
\end{align}
Thus, the only remaining dependence on the latent state appears through 
the product

\begin{align}
p^{\boldsymbol{b}}(r_t, y_t \mid S_t, u_t)\cdot
\Bigg( \prod_{k=0}^{t-1}
p^{\boldsymbol{b}}(S_{k+1}, y_k \mid S_k, u_k) \Bigg)
p^{\boldsymbol{b}}(S_0). \label{latent_state_term}
\end{align}

Our goal is to show that this product can be rewritten in terms of 
observable quantities via the weight matrices $W_k(\tau^o_k)$.

\vspace{1mm}
\noindent At each $k$, we apply the proxy-based decompositions from
Lemmas~\ref{lem:first_proxy} and~\ref{lem:second_proxy} to obtain
\begin{align}
\label{eq:after_proxy_yk}
\nonumber P^{\boldsymbol{b}}(&S_{k+1}, y_k \mid S_k, u_k)\hspace{35pt}\\[2mm]
\nonumber&=\;
P^{\boldsymbol{b}}(S_{k+1}, y_k \mid Y_{k-1}, u_k)\,\cdot
P^{\boldsymbol{b}}(Y_k \mid Y_{k-1}, u_k)^{-1}\,\\[2mm]
&\hspace{25pt}\cdot P^{\boldsymbol{b}}(Y_k \mid S_k, u_k).
\end{align}
Similarly, the reward-based term can also be decomposed to obtain 
\begin{align}
\label{eq:rewd_after_proxy_yk}
\nonumber P^{\boldsymbol{b}}&(r_{t}, y_{t} \mid S_t, u_t)\\[2mm]
\nonumber=&\;
P^{\boldsymbol{b}}(r_{t}, y_{t} \mid Y_{t-1}, u_t)\,\cdot
P^{\boldsymbol{b}}(Y_t \mid Y_{t-1}, u_t)^{-1}\,\\[2mm]
&\cdot P^{\boldsymbol{b}}(Y_t \mid S_t, u_t).
\end{align}


\vspace{1mm}
\noindent Next, we consider the product of two consecutive full-state dependent terms:
\begin{align}
\nonumber
P^{\boldsymbol{b}}&(S_{k+1}, y_k \mid S_k, u_k)\,\cdot
  P^{\boldsymbol{b}}(S_k, y_{k-1} \mid S_{k-1}, u_{k-1})
\\[2mm]
\nonumber
=&\;
P^{\boldsymbol{b}}(S_{k+1}, y_k \mid Y_{k-1}, u_k)\,\cdot
P^{\boldsymbol{b}}(Y_k \mid Y_{k-1}, u_k)^{-1}\,\\[2mm]
\nonumber&\cdot \boxed{P^{\boldsymbol{b}}(Y_k \mid S_k, u_k)
\cdot
P^{\boldsymbol{b}}(S_k, y_{k-1} \mid Y_{k-2}, u_{k-1})}\,\\[2mm]
&\cdot P^{\boldsymbol{b}}(Y_{k-1} \mid Y_{k-2}, u_{k-1})^{-1}\,\cdot
P^{\boldsymbol{b}}(Y_{k-1} \mid S_{k-1}, u_{k-1}), \label{conseec_prod_terms}
\end{align}
where we have applied the same proxy expansions to the second factor.
We now focus on the product of the highlighted inner full-state dependent terms in \eqref{conseec_prod_terms}. 
\begin{align*}
P^{\boldsymbol{b}}(Y_k \mid S_k, u_k)\,\cdot
P^{\boldsymbol{b}}(S_k, y_{k-1} \mid Y_{k-2}, u_{k-1}).         
\end{align*}

\noindent Consider the observable matrix 
$P^{\boldsymbol{b}}(Y_k, y_{k-1} \mid Y_{k-2}, u_{k-1})$.   
Using the law of total probability, we expand the observable matrix
\begin{align}
\nonumber P^{\boldsymbol{b}}(Y_k, y_{k-1} \mid Y_{k-2}, u_{k-1})\\[2mm]
\nonumber =\sum_{s_k \in \mathcal{S}} P^{\boldsymbol{b}}(Y_k \mid s_k, y_{k-1}, & Y_{k-2}, u_{k-1})\,\\
\cdot &P^{\boldsymbol{b}}(s_k, y_{k-1} \mid Y_{k-2}, u_{k-1}),
\end{align}
where, by the observation-model-based conditional independence
\begin{align*}
Y_k \perp\!\!\!\!\perp (y_{k-1}, Y_{k-2}) \mid (S_k, u_k),  
\end{align*}
the conditional probability simplifies to
\begin{align*}
P^{\boldsymbol{b}}(Y_k \mid s_k, y_{k-1}, Y_{k-2}, u_{k-1})
=
P^{\boldsymbol{b}}(Y_k \mid s_k, u_k).    
\end{align*}
Thus, the inner terms in \eqref{conseec_prod_terms} can now be rewritten to incorporate the proxies as
\begin{align}
\nonumber P^{\boldsymbol{b}}(Y_k \mid S_k, u_k)\, &\cdot
P^{\boldsymbol{b}}(S_k, y_{k-1} \mid Y_{k-2}, u_{k-1})\\[2mm]
&=P^{\boldsymbol{b}}(Y_k, y_{k-1} \mid Y_{k-2}, u_{k-1}).
\label{eq:weight_identity}
\end{align}

\medskip
\noindent\emph{Defining the weight matrix:}
Motivated by~\eqref{eq:after_proxy_yk} and~\eqref{eq:weight_identity}, we 
define, for $k\ge 1$, the observable weight matrix
\begin{align}
\nonumber&W_k(\tau^o_k)\\[2mm]
&=
P^{\boldsymbol{b}}(Y_k \mid Y_{k-1}, u_k)^{-1}
\cdot
P^{\boldsymbol{b}}(Y_k, y_{k-1} \mid Y_{k-2}, u_{k-1}).
\end{align}
Equation~\eqref{eq:weight_identity} shows that the factor involving the 
latent state $S_k$ in the inner product collapses to the observable term 
$P^{\boldsymbol{b}}(Y_k, y_{k-1} \mid Y_{k-2}, u_{k-1})$, while the 
remaining pre-multiplication by 
$P^{\boldsymbol{b}}(Y_k \mid Y_{k-1},u_k)^{-1}$ is also observable.  
We collect these terms into $W_k(\tau^o_k)$.
For $k=0$, the same construction is applied at the initial time step, using 
Assumption~\ref{assm_extraobs}, yields
\begin{align}
W_0(\tau^o_0)
=
P^{\boldsymbol{b}}(Y_0 \mid u_0, Y_{N})^{-1}\cdot
P^{\boldsymbol{b}}(Y_0),  \end{align}
where $Y_{N}$ denotes the null observation associated with the prior.

\medskip
Substituting~\eqref{eq:weight_identity} into the expression \ref{conseec_prod_terms} and using 
the definition of $W_k(\tau^o_k)$, we obtain
\begin{align*}
&P^{\boldsymbol{b}}(S_{k+1}, y_k \mid S_k, u_k)\, \cdot
P^{\boldsymbol{b}}(S_k, y_{k-1} \mid S_{k-1}, u_{k-1}) \\[2mm]
&=
P^{\boldsymbol{b}}(S_{k+1},y_k \mid Y_{k-1}, u_k)\, \cdot
W_k(\tau^o_k)\,\\[2mm]
&\quad
\cdot P^{\boldsymbol{b}}(Y_{k-1} \mid Y_{k-2}, u_{k-1})^{-1}\,\cdot
P^{\boldsymbol{b}}(Y_{k-1} \mid S_{k-1}, u_{k-1}),
\end{align*}
\noindent so that all dependence on $S_k$ has been removed and its effect is encoded 
by the observable matrix $W_k(\tau^o_k)$ and boundary terms involving 
$Y_{k-1}$ and $S_{k-1}$.

We apply this reduction iteratively for $k=0,\ldots,t$ to the full-state based product \eqref{latent_state_term} to obtain a chain of observable weight matrices
\begin{align}
p^{\boldsymbol{b}}(r_t, y_{t} \mid Y_{t-1}, u_{t})\cdot
\Bigg( \prod_{k=0}^{t}
W_{k}(\tau^o_{k})\Bigg). \label{latent_state_term_adjusted}
\end{align}

\noindent multiplying boundary terms that only depend on $(Y_0,S_0)$ and the 
initial prior in Assumption~\ref{assm_extraobs}.  
The remaining dependence on $S_0$ is captured by $p^{\boldsymbol{b}}(S_0)$ 
and is therefore compatible with the observable representation.

We Substitute the resulting expression for the product of the latent-state factors back into the decomposition of Lemma~\ref{lem:reward_decomposition} to produce \eqref{eq:main_theorem_proximal}, completing the proof.
\end{proof}

\section{Numerical Simulation}
\label{section:Numerical Simulation}

In this section, we illustrate the proposed framework on a synthetic clinical decision-making task. 
The example is motivated by intensive care unit (ICU) settings, where doctors prescribe treatments over time. 
In practice, treatments are repeatedly adjusted on the basis of recorded physiological measurements and unrecorded clinical judgment. 
We use this simulation to illustrate why it is necessary to learn a surrogate transition model that satisfies the proposed modified Bellman consistency equation to enable reliable model-based planning.

\subsection{Clinical Treatment Environment}

We consider a finite-horizon C-MDP that represents a short treatment episode for a single patient over a horizon of $T=9$ hours.
The discrete state space $\mathcal{X}$ aggregates physiological measurements like heart rate, blood pressure, and oxygen saturation into coarse health states to represent a severity score:
\begin{align*}
\mathcal{X} = \{ x^1, x^2, x^3, x^4 \},    
\end{align*}
where $x^1$ denotes a \emph{stable} state, $x^2$ a \emph{borderline} state, $x^3$ a \emph{deteriorating} state, and $x^4$ a \emph{critical}
state. 

At each time $t$, the doctor chooses an action  corresponding to the intensity of treatment, from the set 
\begin{align*}
\mathcal{U} = \{ u^1, u^2, u^3 \},    
\end{align*}
where $u^1$ is \emph{conservative treatment} (standard care), $u^2$ is \emph{moderate escalation}, and $u^3$ is \emph{aggressive escalation}.

The key feature is an unrecorded, patient-specific context $Z_t \in \mathcal{Z}$ that captures latent factors such as comorbidities and frailty, which the doctor can observe or access. We consider
\begin{align*}
 \mathcal{Z} = \{ z^L, z^H \},   
\end{align*}
where $z^L$ denotes a \emph{low-risk} profile and $z^H$ a \emph{high-risk} profile. We assume that the initial context $Z_0$ is drawn from a distribution $\nu$ and is persistent thereafter. Hence, $Z_{t+1} = Z_t$ for all $t$, which is consistent with the special-case causal graph illustrated in Fig.~\ref{fig:contextual_mdp}.
For example, an aggressive treatment $U_t=U^2$ may move a low-risk critical patient from $x^4$ to $x^2$ with higher probability than a high-risk patient in the same observed state. We design the transition kernel to reflect this intuition.
At each time $t$, the reward $R_t = r(X_t,Z_t,U_t)$ reflects clinical
status and treatment burden. We use
\begin{align*}
r(x,z,u) = r_{\text{health}}(x,z) - \lambda\, c(u),    
\end{align*}
where $r_{\text{health}}(x,z)$ assigns higher reward to less severe states and may depend on the latent context. The term $c(u)$ is indicative of the cost or burden of treatment intensity and weight $\lambda \in \mathbb{R}_{+}$ controls the relative importance of this cost. In our setting, $r_{\text{health}}$ is the dominant component of the reward, while the penalty $\lambda c(u)$ discourages the unnecessary use of high-intensity treatments.

The clinician follows a behavioral policy $\boldsymbol{g}^{\boldsymbol{b}} = \{ g^{\boldsymbol{b}}_{t} \}_{t=0}^{T-1}$ that depends on both $X_t$ and $Z_t$.
Thus, high-risk patients (with $z^H$) may receive aggressive treatment $u^2$ more often than low-risk patients, even when $X_t$ is the same.
\noindent The offline dataset $\mathcal{D}^b=\big\{    (x_0^i,u_0^i,r_0^i,\ldots,x_T^i,u_T^i,r_T^i)\big\}_{i=1}^N$
contains only observable state--action--reward tuples. The latent context $Z_t$ that influenced both $U_t$ and $R_t$ is never recorded.

    In the experiments, we specify concrete transition and reward parameters for each $(x,z,u)$ and choose $g_b$ to mimic a risk-aware
clinician. We then compare a naive model that uses data averaging and the proposed surrogate MDP with proximal reward correction, with an oracle model that observes $(X_t,Z_t)$.
Although our primary focus is on model quality, we note that in this setting the policy learned along with the proximal surrogate attains an expected return approximately $1.4\%$ higher than that of the naive learner (Monte Carlo estimate with $N = 1000$ episodes).

\begin{figure}
    \centering
    \includegraphics[width=\linewidth]{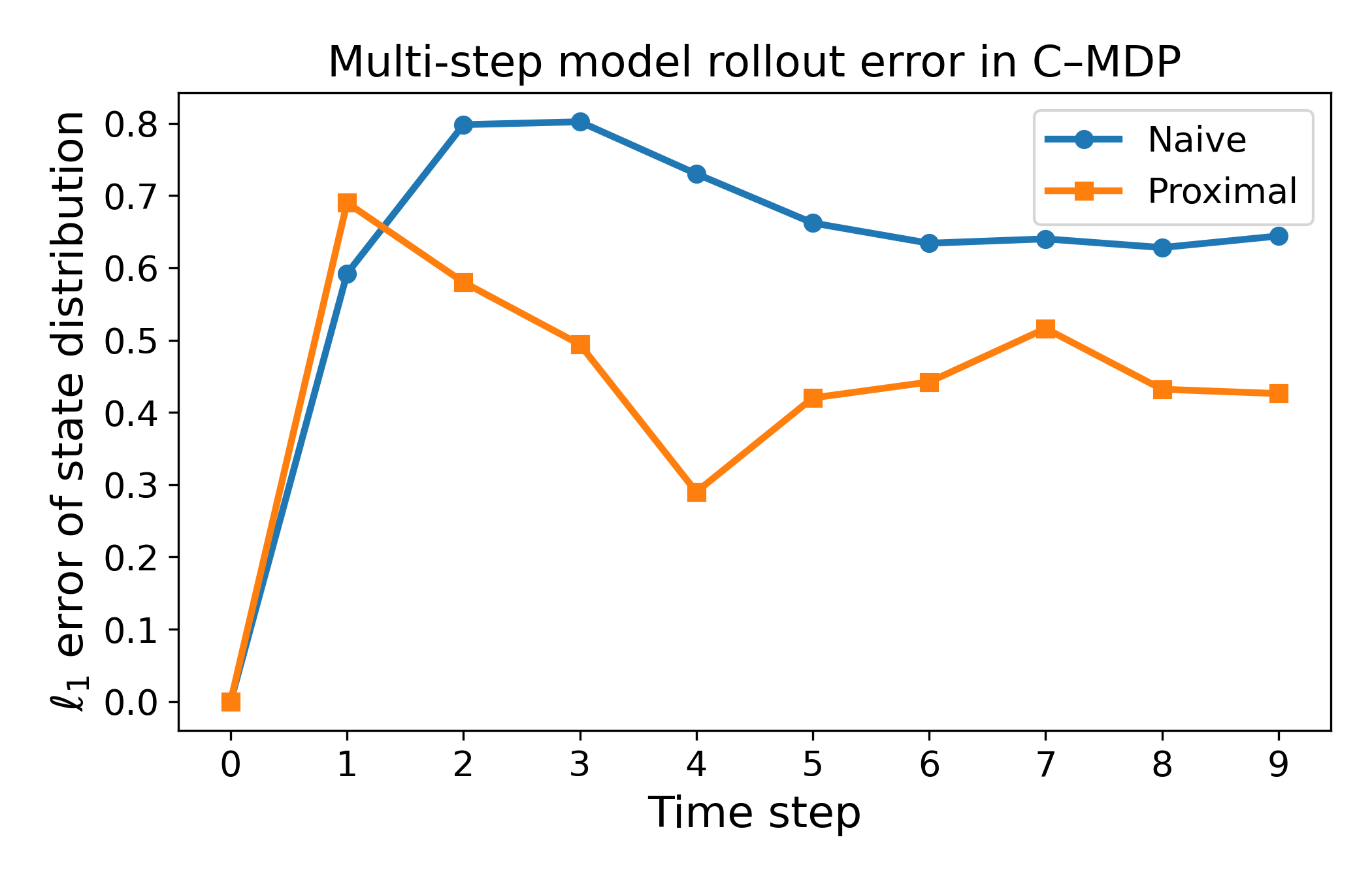}
    \caption{Comparison of multi-step rollout error}
    \label{fig:rollout_error}
\vspace{-15pt}
\end{figure}

Figure~\ref{fig:rollout_error} reports the multi-step rollout error in the C-MDP.
We measure the $\ell_1$ distance between the state distribution induced by each learned model and that of the true system under a fixed policy.
The empirical state distributions are estimated from $N = 1000$ Monte Carlo episodes.
The naive model attains a good one-step fit to the confounded data, but the rollout error rapidly accumulates and remains large over the horizon.
This behavior indicates that repeated application of the biased transition kernel leads to distorted evolution of the state.
In contrast, the surrogate model based on proximal learning exhibits consistently smaller $\ell_1$ error for $t \ge 2$. 
This suggests that the combination of the proximal reward construction and the surrogate dynamics yields a more accurate model of the system. 
In particular, the learned surrogate better reflects the long-horizon influence of the latent risk context on the observable state evolution, even though $Z_t$ is never observed in the offline dataset.

\section{Conclusion}
\label{sec:conclusion}

In this paper, we presented a model-based reinforcement learning framework for contextual MDPs in which the unobserved context confounds both the transition and reward mechanisms. By recasting the problem as a POMDP and analyzing it from a causal perspective, we showed that the reward component of the Bellman equation is not identifiable from observational data alone. To overcome this issue, we adapt a proximal off-policy evaluation method that reconstructs the confounded reward expectation using observable proxy variables under standard invertibility conditions, yielding an identifiable surrogate reward model compatible with state-based policies.

Combining this proximal correction with a behavior-averaged transition model and a maximum causal entropy formulation produced a Bellman-consistent surrogate MDP suitable for planning in the presence of hidden confounding. The results extend the applicability of model-based reinforcement learning to settings in which contextual information is unavailable or costly to obtain. A potential direction for future research should explore extensions to continuous state spaces and relaxations of the invertibility requirements.

\bibliographystyle{ieeetr}

\bibliography{References,Latest_IDS}

\end{document}